\let\oldvec\vec
\let\vec\oldvec
\newcommand{\Rcal}{\mathcal{R}}
\newcommand{\supp}{\operatorname{supp}}
\renewcommand{\qed}{\hfill$\square$}
\begin{document}
\title{Geometry of Policy Improvement} 
\titlerunning{Geometry of Policy Improvement}  
%
\author{Guido Mont\'ufar \and Johannes Rauh}
%
%
\tocauthor{Guido Mont\'ufar and Johannes Rauh}
\institute{	Max Planck Institute for Mathematics in the Sciences\\
	Inselstra\ss e 22, 04103 Leipzig, Germany
}
\maketitle              

\begin{abstract}
We investigate the geometry of optimal memoryless time independent decision making in relation to the amount of information that the acting agent has about the state of the system. We show that the expected long term reward, discounted or per time step, is maximized by policies that randomize among at most~$k$ actions whenever at most $k$ world states are consistent with the agent's observation. Moreover, we show that the expected reward per time step can be studied in terms of the expected discounted reward. Our main tool is a geometric version of the policy improvement lemma, which identifies a polyhedral cone of policy changes in which the state value function increases for all states. 
\keywords{Partially Observable Markov Decision Process, Reinforcement Learning, memoryless stochastic policy, policy gradient theorem}
\end{abstract}

\section{Introduction}

We are interested in the amount of randomization that is needed in action selection mechanisms in order to maximize the expected value of a long term reward, depending on the uncertainty of the acting agent about the system state. 

It is known that in a Markov Decision Process (MDP), the optimal policy may always be chosen deterministic (see, e.g.,~\cite{Ross:1983:ISD:538843}), in the sense
that the action $a$ that the agent chooses is a deterministic function of the world state $w$ the agent observes. 
This is no longer true in a Partially Observable MDP (POMDP), where the agent does not observe $w$ directly, but only the value $s$ of a sensor.  In general, optimal memoryless policies for POMDPs are stochastic. 
However, the more information the agent has about~$w$, the less stochastic an optimal policy needs to be. 
As shown in~\cite{montufar2015geometry}, if a particular sensor value $s$ uniquely identifies~$w$, then the optimal policy may be chosen such that, on observing $s$, the agent always chooses the same action. 
We generalize this as follows: The agent may choose an optimal policy such that, 
if a given sensor value~$s$ can be observed from at most $k$ world states, then the agent chooses an action probabilistically among a set of at most~$k$ actions. 

Such characterizations can be used to restrict the search space when searching for an optimal policy. 
In~\cite{Ay2013selection}, it was proposed to construct a low-dimensional manifold of policies that contains an optimal policy in its closure and to restrict the learning algorithm to this manifold. 
In~\cite{montufar2015geometry}, it was shown how to do this in the POMDP setting when it is known that the optimal policy can be chosen deterministic in certain sensor states. 
This construction can be generalized and gives manifolds of even smaller dimension
when the randomization of the policy can be further restricted. 

As in~\cite{montufar2015geometry}, we study the case where at each time step the agent receives a reward that depends on the world state $w$ and the chosen action~$a$. 
We are interested in the long term reward in either the average or the discounted sense~\cite{suttonbarto98}. 
Discounted rewards are often preferred in theoretical analysis, because of the properties of the dynamic programming operators. 
In~\cite{montufar2015geometry}, the analysis of average rewards was much more involved than the analysis of discounted rewards. 
While the case of discounted rewards follows from a policy improvement argument, an elaborate geometric analysis was needed for the case of average rewards. 

Various works have compared average and discounted rewards~\cite{Tsitsiklis2002,Kakade2001,Hutter2006}. 
Here, we develop a tool that allows us to transfer properties of optimal policies from the discounted case to the
average case. 
Namely, the average case can be seen as the limit of the discounted case when the discount factor $\gamma$ approaches~$1$. 
If the Markov chain is irreducible and aperiodic, this limit is uniform, and the optimal policies of the discounted case converge to optimal policies of the average case.

\section{Optimal Policies for POMDPs}
\label{sec:notation}

A (discrete time) partially observable Markov decision process (POMDP) is defined by a tuple $(W,S,A,\alpha,\beta, R)$, 
where $W$, $S$, $A$ are finite sets of world states, sensor states, 
and actions, 
$\beta\colon W\to \Delta_S$ and $\alpha\colon W\times A\to \Delta_W$ are Markov kernels describing sensor measurements and world state transitions, 
and $R\colon W\times A\to \mathbb{R}$ is a reward signal. 
We consider stationary (memoryless and time independent) action selection mechanisms, described by Markov kernels of the form $\pi\colon S\to \Delta_A$. 
We denote the set of stationary policies by $\Delta_{S,A}$. 
We write $p^{\pi}(a| w) = \sum_{s}\beta(s| w)\pi(a|s)$ for the effective world state policy. 
%
Standard reference texts are~\cite{suttonbarto98,Ross:1983:ISD:538843}. 

We assume that the Markov chain starts with a distribution~$\mu\in\Delta_{W}$ and then progresses according to $\alpha$, $\beta$ and a fixed policy~$\pi$. 
We denote by~$\mu^{t}_{\pi}\in\Delta_{W}$ the distribution of the world state at time~$t$. 
It is well known that the limit $p^{\pi}_{\mu}:=\lim_{T\to\infty}\frac{1}{T}\sum_{t=0}^{T-1}\mu^{t}_\pi$ exists and is a stationary distribution of the Markov
chain. 
The following technical assumption is commonly made:
\begin{itemize}
	\item[$(*)$] For all $\pi$, the Markov chain over world states is aperiodic and irreducible.
\end{itemize}
The most important implication of irreducibility is that the limit distribution $p^{\pi}_{\mu}$ is independent of~$\mu$.
If the chain has period~$s$, then $p^{\pi}_{\mu} = \lim_{T\to\infty}\frac{1}{s}\sum_{t=1}^{s}\mu^{T+t}_{\pi}$.  In
particular, under assumption~$(*)$, $\mu^{t}_\pi\to p^{\pi}_{\mu}$ for any~$\mu$.  (Since we assume finite sets, all notions of convergence of probability distributions are equivalent.)

The objective of learning is to maximize the expected value of a long term reward. 
The (normalized) discounted reward with discount factor $\gamma\in[0,1)$ is
\begin{equation*}
  \!\!
  \Rcal_\mu^\gamma(\pi)
  = (1-\gamma) \sum_{t=0}^{\infty} \gamma^{t}\sum_{w}\mu^{t}_{\pi}(w) \sum_{a}
  p^{\pi}(a| w)
  R(a,w)  
    = (1-\gamma) \mathbb{E}_{\pi,\mu}\Big[\sum_{t=0}^{\infty} \gamma^{t} R(a_{t},w_{t})\Big]. 
  \!\!
\end{equation*}
The average reward is
\begin{equation*}
\Rcal_\mu(\pi)
= \sum_{w} p^{\pi}_{\mu}(w) \sum_{a} 
p^{\pi}(a| w)
R(a,w).
\end{equation*}
Under assumption $(*)$, $\Rcal_{\mu}$ is independent of the choice of~$\mu$ and
depends continuously on~$\pi$, as we show next. 
Since $\Delta_{S,A}$ is compact, the existence of optimal policies is guaranteed. 
Without the assumption, optimal policies need not exist. 
On the other hand, the expected discounted reward $\Rcal_{\gamma}^{\mu}$ is always continuous, so that, for this, optimal policies always exist.

\begin{lemma}
	\label{lem:R-continuous}
	Under assumption~$(*)$, $\Rcal_\mu(\pi)$ is continuous as a function of~$\pi$.
\end{lemma}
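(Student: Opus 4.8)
The plan is to reduce the statement to the continuity of the map $\pi \mapsto p^\pi_\mu$ that sends a policy to its stationary distribution, and then to establish that continuity by a compactness argument relying on the uniqueness of the stationary distribution guaranteed by assumption~$(*)$.

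First I would rewrite $\Rcal_\mu(\pi) = \sum_w p^\pi_\mu(w)\, r^\pi(w)$, where $r^\pi(w) = \sum_a p^\pi(a|w) R(a,w)$ is the expected immediate reward in world state~$w$. Since $p^\pi(a|w) = \sum_s \beta(s|w)\pi(a|s)$ is linear in the entries of~$\pi$, the factor $r^\pi(w)$ is an affine, hence continuous, function of~$\pi$. Thus the entire difficulty lies in showing that the other factor $p^\pi_\mu(w)$ depends continuously on~$\pi$; granting this, $\Rcal_\mu$ is a finite sum of products of continuous functions and is therefore continuous.

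For the stationary distribution I would argue as follows. The transition matrix of the induced world-state chain, $P^\pi(w'|w) = \sum_a p^\pi(a|w)\,\alpha(w'|w,a)$, is again polynomial, hence continuous, in~$\pi$. Let $\pi_n \to \pi$ be any convergent sequence of policies. By compactness of $\Delta_W$, every subsequence of $(p^{\pi_n}_\mu)_n$ admits a further subsequence converging to some $q \in \Delta_W$. Passing to the limit in the stationarity relation $p^{\pi_n}_\mu P^{\pi_n} = p^{\pi_n}_\mu$ and using $P^{\pi_n} \to P^\pi$ yields $q P^\pi = q$, so $q$ is a stationary distribution of $P^\pi$. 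Under assumption~$(*)$ the chain $P^\pi$ is irreducible, so its stationary distribution is unique and equals $p^\pi_\mu$; hence $q = p^\pi_\mu$. Since every convergent subsequence of $(p^{\pi_n}_\mu)_n$ thus has the same limit $p^\pi_\mu$, and the sequence lies in the compact set $\Delta_W$, the full sequence converges to $p^\pi_\mu$. This proves that $\pi \mapsto p^\pi_\mu$ is continuous and completes the argument.

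The step I expect to be the main obstacle is precisely this continuity of the stationary distribution; everything else is routine. The delicate point is that irreducibility must hold not only at the limit~$\pi$ but along the whole approaching sequence, so that each $p^{\pi_n}_\mu$ is genuinely the unique stationary distribution — which is exactly what the global hypothesis~$(*)$ supplies. An alternative, more explicit route would bypass the compactness argument by invoking the Markov chain tree theorem, which expresses $p^\pi_\mu(w)$ as a ratio of two polynomials in the entries of $P^\pi$ (sums of spanning-tree weights); irreducibility forces the denominator to be strictly positive, exhibiting $p^\pi_\mu$ as a rational, hence continuous, function of~$\pi$.
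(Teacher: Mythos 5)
Your proof is correct, but your main argument takes a genuinely different route from the paper's. The paper disposes of the lemma in one line: by $(*)$, $p^{\pi}_{\mu}$ is the unique solution of a linear system of equations (the stationarity conditions together with normalization) whose coefficients depend smoothly on~$\pi$; made explicit, e.g.\ via Cramer's rule or the Markov chain tree theorem, this exhibits $p^{\pi}_{\mu}$ as a rational function of $\pi$ with nonvanishing denominator --- which is precisely the ``alternative route'' you sketch in your final paragraph. Your primary argument is instead topological: continuity of $\pi\mapsto P^{\pi}$, compactness of $\Delta_{W}$, a sub-subsequence extraction, passage to the limit in $p^{\pi_{n}}_{\mu}P^{\pi_{n}} = p^{\pi_{n}}_{\mu}$, and uniqueness of the stationary distribution at the limit policy. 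Both are valid; yours is more elementary, needing only existence of stationary distributions, uniqueness at the limit point, and continuity of the transition matrix in~$\pi$, while the paper's approach buys more, namely smoothness (indeed rationality) of $\pi\mapsto p^{\pi}_{\mu}$ rather than mere continuity, which is the natural setting for gradient-based arguments elsewhere in the literature the paper engages with. One small correction to your commentary: in the subsequence step, what you need from each $\pi_{n}$ is only that $p^{\pi_{n}}_{\mu}$ is \emph{some} stationary distribution of $P^{\pi_{n}}$, which the Ces\`aro limit defining it always is, even without irreducibility; uniqueness --- and hence assumption $(*)$ --- is genuinely needed only at the limit policy $\pi$, where the subsequential limit $q$ must be identified with $p^{\pi}_{\mu}$. (Of course, $(*)$ for all $\pi$ is still what makes $\Rcal_{\mu}$ independent of $\mu$ and the lemma's statement clean, as the paper notes.)
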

\begin{proof}
By $(*)$, $p^\pi_{\mu}$ is the unique solution to a linear system of equations that smoothly depends on~$\pi$.  
  Thus, $\Rcal_\mu$ is continuous as a function of~$\pi$.
  \qed\end{proof}
\begin{lemma}
	\label{lem:Rgamma-continuous}
	For fixed~$\mu$ and $\gamma\in[0,1)$, $\Rcal_{\gamma}^{\mu}(\pi)$ is continuous as a function of~$\pi$.
\end{lemma}
\begin{proof}
  Fix $\epsilon>0$.  There exists $l>0$ such that $(1-\gamma)\sum_{t=l}^{\infty}\gamma^{t} R \le \epsilon/4$, where
  $R=\max_{a,w}|R(a,w)|$.  For each~$t$, the distribution $\mu_{\pi}^{t}$ depends continuously on~$\pi$.  For fixed
  $\pi$, let $U$ be a neighborhood of~$\pi$ such that $|\mu_{\pi}^{t}(w) - \mu_{\pi'}^{t}(w)| \le
  \frac{1}{2|W|R}\epsilon$ for $t=0,\dots,l-1$ and~$\pi'\in U$.  Then, for all $\pi'\in U$,
	\begin{equation*}
	|\Rcal_{\gamma}^{\mu}(\pi) - \Rcal_{\gamma}^{\mu}(\pi')|
	\le \frac{\epsilon}{2} + (1-\gamma) \sum_{t=0}^{l-1} \gamma^{t} \sum_{w} |\mu_{\pi}^{t}(w) - \mu_{\pi'}^{t}(w)| R
	\le \frac{\epsilon}{2} + \frac{|W|}{2 |W|R} \epsilon R = \epsilon. 
    \tag*{$\square $}
	\end{equation*}
\end{proof}

The following refinement of the analysis of~\cite{montufar2015geometry} is our main result.
\begin{theorem}
  \label{thm:bounded-stochasticity}
  Consider a POMDP $(W,S,A,\alpha,\beta,R)$, and let $\mu\in\Delta_{W}$ and $\gamma\in[0,1)$. 
 There is a stationary (memoryless, time independent) policy $\pi^\ast\in \Delta_{S,A}$ with
  $|\supp(\pi^\ast(\cdot|s))| \leq |\supp(\beta(s|\cdot))|$ for all $s\in S$ and $\Rcal_{\mu}^{\gamma}(\pi^\ast)\geq
  \Rcal_{\mu}^{\gamma}(\pi)$ for all $\pi\in \Delta_{S,A}$.
  Under assumption~$(*)$, the same holds true for~$\Rcal_{\mu}$ in place of~$\Rcal_{\mu}^{\gamma}$. 
\end{theorem}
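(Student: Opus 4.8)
The plan is to prove the discounted statement by a geometric policy-improvement argument and then obtain the average-reward statement by letting $\gamma\to 1$.

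For the discounted case, first note that an optimal policy exists, since $\Rcal_\mu^\gamma$ is continuous (Lemma~\ref{lem:Rgamma-continuous}) and $\Delta_{S,A}$ is compact. Write $V^\pi(w)=\mathbb{E}_{\pi}[\sum_t\gamma^tR(a_t,w_t)\mid w_0=w]$ for the (unnormalized) state value function, so that $\Rcal_\mu^\gamma(\pi)=(1-\gamma)\sum_w\mu(w)V^\pi(w)$, and set $Q^\pi(w,a)=R(a,w)+\gamma\sum_{w'}\alpha(w'|w,a)V^\pi(w')$, so that $V^\pi(w)=\sum_a p^\pi(a|w)Q^\pi(w,a)$. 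The key tool is the geometric policy improvement lemma: if a second policy $\pi'$ satisfies $\sum_a p^{\pi'}(a|w)Q^\pi(w,a)\ge V^\pi(w)$ for every $w$, then $(I-\gamma P^{\pi'})(V^{\pi'}-V^\pi)\ge 0$ with $(I-\gamma P^{\pi'})^{-1}=\sum_t(\gamma P^{\pi'})^t\ge 0$ entrywise, hence $V^{\pi'}\ge V^\pi$ pointwise. Because $\Rcal_\mu^\gamma$ is a nonnegative average of the values $V^\pi(w)$, any such $\pi'$ is again optimal whenever $\pi$ is.

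Now fix an optimal $\pi^*$, a sensor state $s$, and set $T=\supp(\pi^*(\cdot|s))$, $W_s=\supp(\beta(s|\cdot))$, $m=|T|$, $k=|W_s|$. Changing only $\pi^*(\cdot|s)$ along a direction $\delta$ with $\supp(\delta)\subseteq T$ and $\sum_a\delta(a)=0$ changes the effective policy by $\delta p(a|w)=\beta(s|w)\delta(a)$, so the improvement condition becomes $(L\delta)_w:=\sum_a\delta(a)Q^{\pi^*}(w,a)\ge 0$ for all $w\in W_s$ (it holds with equality off $W_s$). The heart of the argument is a dimension count on the space $D=\{\delta:\supp(\delta)\subseteq T,\ \sum_a\delta(a)=0\}$ of dimension $m-1$, together with the linear map $L\colon D\to\mathbb{R}^{W_s}$. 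If $m>k$, then $\dim D\ge k$ and a nonzero $\delta\in D$ with $L\delta\ge 0$ always exists: if $L$ is not injective, any $0\ne\delta\in\ker L$ works; otherwise $\dim D=k$ forces $L$ to be bijective, so $\delta=L^{-1}\mathbf{1}$ satisfies $L\delta=\mathbf{1}>0$. Moving $\pi^*(\cdot|s)\mapsto\pi^*(\cdot|s)+t\delta$ keeps the policy optimal for every feasible $t\ge0$ (the improvement condition scales with $t$), and since $\pi^*(\cdot|s)$ lies in the relative interior of the simplex on $T$, at the largest feasible $t$ some coordinate vanishes and the support strictly shrinks. Iterating drives $|\supp(\pi^*(\cdot|s))|$ down to at most $k$; performing this sensor state by sensor state (modifying $\pi(\cdot|s)$ never enlarges the support at the other sensor states) yields the claimed $\pi^*$.

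For the average case under $(*)$, I would realize $\Rcal_\mu$ as the limit of $\Rcal_\mu^\gamma$ as $\gamma\to1$. For each $\gamma$ pick a discounted optimizer $\pi_\gamma^*$ with the support bound, and choose $\gamma_n\to1$ with $\pi_{\gamma_n}^*\to\pi^*$ by compactness of $\Delta_{S,A}$. The support bound passes to the limit because $\supp(\pi^*(\cdot|s))\subseteq\supp(\pi_{\gamma_n}^*(\cdot|s))$ for large $n$, whence $|\supp(\pi^*(\cdot|s))|\le k$. Optimality transfers once one knows $\Rcal_\mu^\gamma\to\Rcal_\mu$ uniformly on $\Delta_{S,A}$: from $\Rcal_{\mu}^{\gamma_n}(\pi_{\gamma_n}^*)\ge\Rcal_{\mu}^{\gamma_n}(\pi)$ one lets $n\to\infty$, using uniform convergence together with continuity of $\Rcal_\mu$ (Lemma~\ref{lem:R-continuous}) to conclude $\Rcal_\mu(\pi^*)\ge\Rcal_\mu(\pi)$ for every $\pi$.

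I expect the main obstacle to be precisely this uniform convergence $\Rcal_\mu^\gamma\to\Rcal_\mu$. It amounts to an Abelian theorem: writing $r_t(\pi)=\sum_w\mu_\pi^t(w)\sum_a p^\pi(a|w)R(a,w)$, assumption~$(*)$ gives $r_t(\pi)\to\Rcal_\mu(\pi)$, and $\Rcal_\mu^\gamma(\pi)=(1-\gamma)\sum_t\gamma^t r_t(\pi)$ is the Abel mean of the sequence $(r_t(\pi))_t$. The delicate point is uniformity in $\pi$: one must control the mixing of the chains $P^\pi$ uniformly, which I would obtain from a spectral gap bounded away from zero by continuity of $\pi\mapsto P^\pi$ and compactness of $\Delta_{S,A}$. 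Granting uniform geometric convergence $r_t\to\Rcal_\mu$, uniform convergence of the Abel means, and hence the optimality transfer, follow.
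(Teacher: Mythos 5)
Your proposal is correct and follows essentially the same route as the paper: a policy improvement argument based on the cone of perturbations $\delta$ of $\pi(\cdot|s)$ satisfying $\sum_a\delta(a)\,Q^\pi(w,a)\ge 0$ for all $w\in\supp(\beta(s|\cdot))$ (the paper's Lemmas~\ref{lemma:policyimprovement} and~\ref{lem:policy-improvement-cone}), combined with a dimension count to force $|\supp(\pi^\ast(\cdot|s))|\le|\supp(\beta(s|\cdot))|$, and then the average case via uniform convergence $\Rcal_\mu^\gamma\to\Rcal_\mu$ from a uniform spectral gap on the compact policy set (exactly Lemma~\ref{lem:mu-converges-uniformly}, Proposition~\ref{prop:R-converges-uniformly}, Theorem~\ref{thm:maximum-convergence}, and Corollary~\ref{cor:support-restrictions-survive}). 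The only substantive variation is the dimension count itself: you iteratively shrink the support of an optimal policy along kernel directions (or along $L^{-1}\mathbf{1}$ in the critical dimension), while the paper proves a one-shot polytope lemma (Lemma~\ref{lem:dimension-lemma}, by induction on the number of halfspaces) showing the improvement cone meets a face of $\Delta_A$ of dimension at most $k-1$ --- a formulation that applies at an arbitrary policy, not only at an optimum, which is what makes it usable for restricting the search space during learning; both versions are valid proofs of the theorem as stated.
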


We prove the discounted case in Section~\ref{sec:disc-rewards} and the average case in Section~\ref{sec:averagefromdiscounted}.

\section{Discounted Rewards from Policy Improvement}
\label{sec:disc-rewards}
The state value function $V^\pi$ of a policy $\pi$ is defined as the unique solution of the Bellman equation 
\begin{equation*}
  V^\pi(w) = \sum_a p^\pi(a|w)\Big[R(w,a) +  \gamma \sum_{w'}\alpha(w'|w,a) V^\pi(w')\Big],\quad w\in W. 
\end{equation*}
It is useful to write $V^\pi(w) = \sum_a p^\pi(a|w) Q^{\pi}(w,a)$, where
\begin{equation*}
  \textstyle
  Q^\pi(w,a) = R(w,a) + \gamma \sum_{w'}\alpha(w'|w,a)V^\pi(w'), \quad w\in W, a\in A, 
\end{equation*}
is the state action value function. 
Observe that $\Rcal_{\mu}^\gamma(\pi)=(1-\gamma)\sum_w\mu(w)V^\pi(w)$. 
If two policies $\pi,\pi'$ satisfy $V^{\pi'}(w)\geq V^\pi(w)$ for all $w$, then $\Rcal_{\mu}^\gamma(\pi')\geq
\Rcal_{\mu}^\gamma(\pi)$ for all~$\mu$.
The following is a more explicit version of a lemma from~\cite{montufar2015geometry}: 
\begin{lemma}[Policy improvement lemma]
	\label{lemma:policyimprovement}
	Let $\pi, \pi' \in \Delta_{S,A}$ and 
$\epsilon(w) = \sum_a p^{\pi'}(a|w) Q^\pi(w,a) - V^\pi(w)$ for all $w\in W$. 
Then
    \begin{equation*}
      V^{\pi'}(w) = V^{\pi}(w) + \mathbb{E}_{\pi',w_0=w}\Big[\sum_{t=0}^{\infty}\gamma^t\epsilon(w_t)\Big]\quad \text{for all }w\in W.
    \end{equation*}
    If $\epsilon(w)\geq0$ for all $w\in W$, then
    \begin{equation*}
      V^{\pi'}(w)\geq V^{\pi}(w) + d^{\pi'}(w) \epsilon(w)\quad\text{for all $w\in W$},
    \end{equation*}
    where $d^{\pi'}(w) = \sum_{t=0}^\infty \gamma^t \Pr(w_t=w|\pi',w_0=w)\ge 1$ is the discounted expected number of visits
    to~$w$.
\end{lemma}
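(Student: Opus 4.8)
The plan is to reduce everything to a single linear recursion for the value difference $\delta(w) := V^{\pi'}(w) - V^{\pi}(w)$. First I would rewrite the definition of $\epsilon$ using $Q^\pi(w,a) = R(w,a) + \gamma\sum_{w'}\alpha(w'|w,a)V^\pi(w')$, so that $\epsilon(w) + V^\pi(w) = \sum_a p^{\pi'}(a|w)\bigl[R(w,a)+\gamma\sum_{w'}\alpha(w'|w,a)V^\pi(w')\bigr]$. Subtracting this from the Bellman equation for $V^{\pi'}$, which has the same form but with $V^{\pi'}$ inside, the reward terms cancel and I obtain the recursion
\[
  \delta(w) = \epsilon(w) + \gamma\sum_{w'}P^{\pi'}(w'|w)\,\delta(w'),
\]
where $P^{\pi'}(w'|w) = \sum_a p^{\pi'}(a|w)\alpha(w'|w,a)$ is the one-step world-state transition kernel induced by~$\pi'$.

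Next I would solve this recursion. In matrix form it reads $\delta = \epsilon + \gamma P^{\pi'}\delta$, and since $P^{\pi'}$ is row-stochastic and $\gamma<1$, the matrix $I - \gamma P^{\pi'}$ is invertible with Neumann series $(I-\gamma P^{\pi'})^{-1} = \sum_{t=0}^\infty \gamma^t (P^{\pi'})^t$, absolutely convergent because $W$ is finite. Hence $\delta(w) = \sum_{t=0}^\infty \gamma^t \sum_{w'}(P^{\pi'})^t(w'|w)\,\epsilon(w')$. Identifying $(P^{\pi'})^t(w'|w) = \Pr(w_t = w'|\pi',w_0=w)$ and pulling the sum into the expectation, which is legitimate since the series converges absolutely, gives exactly
\[
  V^{\pi'}(w) - V^{\pi}(w) = \mathbb{E}_{\pi',w_0=w}\Big[\sum_{t=0}^\infty \gamma^t \epsilon(w_t)\Big],
\]
which is the first claim.

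For the inequality I would assume $\epsilon(w')\ge 0$ for all $w'$ and simply discard every summand with $w'\neq w$ in the series for $\delta(w)$; since each factor $\gamma^t$, each transition probability, and each $\epsilon(w')$ is nonnegative, this only decreases the value, leaving
\[
  \delta(w) \ge \Big(\sum_{t=0}^\infty \gamma^t \Pr(w_t=w|\pi',w_0=w)\Big)\epsilon(w) = d^{\pi'}(w)\,\epsilon(w).
\]
Finally, $d^{\pi'}(w)\ge 1$ follows from keeping only the $t=0$ term, where $\Pr(w_0=w|\pi',w_0=w)=1$ and $\gamma^0=1$.

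There is no deep obstacle here: the content is the algebraic cancellation that produces the recursion together with the recognition that its resolvent $(I-\gamma P^{\pi'})^{-1}$ is the discounted occupation measure. The only points needing care are bookkeeping ones: keeping the conditioning $w_0=w$ consistent throughout, justifying the interchange of summation and expectation (immediate from finiteness of $W$ and $\gamma<1$), and ensuring the passage from the exact identity to the inequality discards only nonnegative terms. An alternative route avoiding the matrix inverse is to expand $\mathbb{E}_{\pi',w_0=w}[\sum_{t}\gamma^t\epsilon(w_t)]$ directly, write each $\epsilon(w_t)$ as $\mathbb{E}[R(w_t,a_t)+\gamma V^\pi(w_{t+1})-V^\pi(w_t)]$, and telescope the $V^\pi$ terms using $\gamma^T\mathbb{E}\,V^\pi(w_T)\to 0$; this yields the same identity and may read more cleanly.
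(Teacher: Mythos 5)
Your proof is correct, and its main line is genuinely different from the paper's. The paper argues probabilistically: starting from $V^\pi(w) = \sum_a p^{\pi'}(a|w)Q^\pi(w,a) - \epsilon(w)$, it repeatedly substitutes this identity inside the expectation $\mathbb{E}_{\pi',w_0=w}[\,\cdot\,]$, unrolling one time step at a time until the series $V^{\pi'}(w) - \mathbb{E}_{\pi',w_0=w}\big[\sum_{t}\gamma^t\epsilon(w_t)\big]$ emerges by telescoping --- which is precisely the ``alternative route'' you sketch in your closing paragraph. Your primary argument instead isolates $\delta = V^{\pi'}-V^{\pi}$, shows it satisfies the linear fixed-point equation $\delta = \epsilon + \gamma P^{\pi'}\delta$, and inverts via the Neumann series $(I-\gamma P^{\pi'})^{-1} = \sum_{t=0}^\infty \gamma^t (P^{\pi'})^t$. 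What your route buys: the convergence and interchange-of-summation issues that the paper's unrolling leaves implicit (in particular the vanishing of the tail $\gamma^T\mathbb{E}[V^\pi(w_T)]$ when passing from the finitely unrolled expression to the infinite series) are settled once and for all by the finite-dimensional resolvent, and the quantity $d^{\pi'}(w)$ is identified structurally as a diagonal entry of $(I-\gamma P^{\pi'})^{-1}$ rather than appearing only through a term-dropping step. What the paper's route buys: it is shorter, avoids matrix formalism, and reads naturally in expectation form. Your handling of the inequality (discard the nonnegative summands with $w'\neq w$, and get $d^{\pi'}(w)\ge 1$ from the $t=0$ term) matches in substance what the lemma's statement requires and what the paper leaves to the reader; all steps check out.
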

\begin{proof}
  $\displaystyle V^\pi(w) = \sum_a p^{\pi'}(a|w)Q^\pi(w,a) - \epsilon(w)$
  \setlength{\abovedisplayskip}{0pt}
 \begin{align*}
    &= \mathbb{E}_{\pi',w_0=w}\Big[\Big(R(w_0, a_0) - \epsilon(w_0)\Big) + \gamma V^\pi(w_1) \Big]\\
    &= \mathbb{E}_{\pi',w_0=w}\Big[\Big(R(w_0, a_0)- \epsilon(w_0)\Big) + \gamma \Big(\sum_{a}p^{\pi'}(a|w_1)Q^\pi(w_1,a) - \epsilon(w_1)\Big) \Big]\\
    &= \mathbb{E}_{\pi',w_0=w}\Big[\sum_{t=0}^{\infty} \gamma^t \Big(R(w_t, a_t)  - \epsilon(w_t)\Big)\Big] 
   =  V^{\pi'}(w) - \mathbb{E}_{\pi',w_0=w}\Big[\sum_{t=0}^{\infty}\gamma^t \epsilon(w_t)\Big].
    \tag*{$\square $}
  \end{align*}
\end{proof}

Lemma~\ref{lemma:policyimprovement} allows us to find policy changes that increase $V^{\pi}(w)$ for all~$w\in W$ and thereby $\Rcal^{\gamma}_{\mu}(\pi)$ for any~$\mu$.
\begin{definition}
  Fix a policy $\pi\in\Delta_{S,A}$. For each sensor state $s\in S$ consider the set $\supp(\beta(s|\cdot))=\{w\in W\colon\beta(s|w)>0 \} =
  \{ w^{s}_1,\ldots, w^{s}_{k_{s}}\}$, and define the linear forms
  \begin{equation*}
    l_i^{\pi,s}\colon \Delta_A\to \mathbb{R};\;q\mapsto \sum_a q(a) Q^\pi(w^s_i,a), \quad i=1,\ldots,k_{s}. 
  \end{equation*}
  The \emph{policy improvement cone} at policy~$\pi$ and sensation $s$ is
  \begin{equation*}
    L^{\pi,s} = \big\{ q\in\Delta_{A} : l_{i}^{\pi,s}(q) \ge l_{i}^{\pi,s}(\pi(\cdot|s)) \text{ for all }i=1,\ldots, k_s \big\}.
  \end{equation*}
  The \emph{(total) policy improvement cone} at policy~$\pi$ is
  \begin{equation*}
    L^{\pi} = \big\{ \pi'\in\Delta_{S,A} : \pi'(\cdot|s)\in L^{\pi,s}\text{ for all }s\in S \big\}.
  \end{equation*}
\end{definition}
$L^{\pi,s}$ and $L^{\pi}$ are intersections of $\Delta_{A}$ and~$\Delta_{S,A}$ with intersections of affine halfspaces (see Fig.~\ref{figure:1}). 
Since $\pi\in L^{\pi}$, the policy improvement cones are never empty. 

\begin{lemma}
  \label{lem:policy-improvement-cone}
Let $\pi\in\Delta_{S,A}$ and $\pi'\in L^{\pi}$. Then, for all~$w$,
  \begin{equation*}
    V^{\pi'}(w) - V^\pi(w) \geq  d^{\pi'}(w) \sum_s \beta(s|w) \sum_{a} (\pi'(a|s) - \pi(a|s)) Q^\pi(w,a) \ge 0. 
  \end{equation*}
\end{lemma}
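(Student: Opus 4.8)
The plan is to recognize that the right-hand side of the claimed inequality is exactly $d^{\pi'}(w)\,\epsilon(w)$, where $\epsilon$ is the quantity appearing in the policy improvement lemma (Lemma~\ref{lemma:policyimprovement}), and then to verify the hypothesis $\epsilon(w)\ge 0$ of that lemma using membership in the policy improvement cone.

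First I would rewrite $\epsilon(w)$. Starting from $\epsilon(w)=\sum_a p^{\pi'}(a|w)Q^\pi(w,a)-V^\pi(w)$ and substituting $p^{\pi'}(a|w)=\sum_s\beta(s|w)\pi'(a|s)$ together with $V^\pi(w)=\sum_a p^\pi(a|w)Q^\pi(w,a)=\sum_s\beta(s|w)\sum_a\pi(a|s)Q^\pi(w,a)$, the two double sums combine into
\[
  \epsilon(w)=\sum_s\beta(s|w)\sum_a\bigl(\pi'(a|s)-\pi(a|s)\bigr)Q^\pi(w,a).
\]
This is precisely the inner expression multiplying $d^{\pi'}(w)$ in the statement, so the lemma will follow once I know $\epsilon(w)\ge 0$.

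Second, I would establish $\epsilon(w)\ge 0$ from the hypothesis $\pi'\in L^\pi$. The key observation is a matching between indices: for a fixed $w$, only the sensor states $s$ with $\beta(s|w)>0$ contribute to the sum over $s$, and for each such $s$ we have $w\in\supp(\beta(s|\cdot))$, that is, $w=w^s_i$ for some $i\in\{1,\dots,k_s\}$. The definition of $L^{\pi,s}$ requires $l^{\pi,s}_i(\pi'(\cdot|s))\ge l^{\pi,s}_i(\pi(\cdot|s))$ for every such $i$, which unwinds to $\sum_a(\pi'(a|s)-\pi(a|s))Q^\pi(w,a)\ge 0$. Hence every summand is nonnegative (a nonnegative weight $\beta(s|w)$ times a nonnegative quantity, or zero when $\beta(s|w)=0$), so $\epsilon(w)\ge 0$ for all $w$.

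Finally, I would invoke the second part of Lemma~\ref{lemma:policyimprovement}, which under $\epsilon(w)\ge 0$ gives $V^{\pi'}(w)\ge V^\pi(w)+d^{\pi'}(w)\epsilon(w)$; combined with the rewriting of $\epsilon(w)$ this is the left inequality, and the right inequality $d^{\pi'}(w)\epsilon(w)\ge 0$ is immediate since $d^{\pi'}(w)\ge 1$ and $\epsilon(w)\ge 0$. I do not expect a genuine obstacle here; the only point requiring care is the index bookkeeping in the second step, namely ensuring that the cone inequalities, which are indexed by the enumeration $w^s_1,\dots,w^s_{k_s}$ of $\supp(\beta(s|\cdot))$, are applied to the specific world state $w$ at hand, and that the sensor states with $\beta(s|w)=0$ simply drop out rather than needing a sign argument.
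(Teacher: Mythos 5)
Your proposal is correct and follows essentially the same route as the paper's proof: both rewrite $\epsilon(w)$ via the decomposition $p^{\pi}(a|w)=\sum_s\beta(s|w)\pi(a|s)$ into a nonnegative combination $\sum_j\beta(s_j|w)\,l^{\pi,s_j}_{i_j}(\pi'(\cdot|s_j)-\pi(\cdot|s_j))$ of the cone linear forms evaluated at the index $i_j$ with $w=w^{s_j}_{i_j}$, and then invoke the second part of Lemma~\ref{lemma:policyimprovement}. The index bookkeeping you flag as the delicate point is exactly the step the paper handles by fixing $w$ and writing $\supp(\beta(\cdot|w))=\{s_1,\dots,s_l\}$ with $w=w^{s_j}_{i_j}$.
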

\begin{proof}
  Fix $w\in W$.  In the notation from Lemma~\ref{lemma:policyimprovement}, suppose that $\supp(\beta(\cdot|w)) =
  \{s_{1},\dots,s_{l}\}$ and that $w=w_{i_{j}}^{s_{j}}$ for $j=1\dots,l$.  Then
  \begin{align*}
    \epsilon(w)
    & = \sum_a p^{\pi'}(a|w) Q^\pi(w,a) - \sum_{a} p^{\pi}(a|w) Q^{\pi}(w,a) \\
&
   = \sum_{j=1}^{l}\beta(s_{j}|w) l^{\pi,s_{j}}_{i_{j}}(\pi'(\cdot|s_j) - \pi(\cdot|s_j)) \ge 0,
  \end{align*}
  since $\pi'\in L^\pi$. 
  The statement now follows from Lemma~\ref{lemma:policyimprovement}. 
\end{proof}
\begin{remark}
Lemma~\ref{lem:policy-improvement-cone} relates to the policy gradient theorem~\cite{Sutton00policygradient}, which says that 
\begin{equation}
\frac{\partial V^\pi(w)}{\partial\pi(a'|s')} = d^\pi(w) \sum_s \beta(s|w)\sum_a \frac{\partial \pi(a|s)}{\partial \pi(a'|s')} Q^\pi(w,a). 
\end{equation}
Our result adds that, for each $w$, the value function $V^{\pi'}(w)$ is bounded from below by a linear function of $\pi'$ that takes value at least $V^\pi(w)$ within the entire policy improvement cone $L^\pi$. 
See Fig.~\ref{figure:1}. 
\end{remark}
Now we show that there is an optimal policy with small support. 
\begin{lemma}
  \label{lem:dimension-lemma}
  Let $P$ be a polytope, and let $l_{1},\dots,l_{k}$ be linear forms on~$P$. 
  For any $p\in P$, let
  $L_{i,+} = \{q\in P\colon l_{i}(q) \ge l_{i}(p) \}$. 
  Then $\bigcap_{i=1}^{k}L_{i,+}$ contains an element $q$ that belongs to a face of~$P$ of dimension at most~$k-1$.
\end{lemma}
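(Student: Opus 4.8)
The plan is to produce $q$ by descending through a decreasing chain of faces of $P$, lowering the dimension by one at each step while staying inside the polytope $Q := \bigcap_{i=1}^{k} L_{i,+} = \{q\in P : l_i(q)\ge l_i(p),\ i=1,\dots,k\}$. Note that $Q$ is nonempty, since $p\in Q$. Throughout the descent I maintain a point $q\in Q$ together with the minimal face $F$ of $P$ containing it, so that $q$ lies in the relative interior of $F$; I start from $q=p$ and its minimal face.

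The engine of the descent is the following elementary fact: if $\dim F \ge k$, then there is a nonzero vector $u$ in the direction space of $\operatorname{aff}(F)$ with $l_i(u)\ge 0$ for all $i$ (here $l_i$ denotes the homogeneous part acting on directions). Indeed, the cone $C=\{u : l_i(u)\ge 0\ \forall i\}$ is the polar of the conical hull of $l_1,\dots,l_k$ in the dual of the $(\dim F)$-dimensional direction space, and $C=\{0\}$ would force those $k$ forms to positively span that dual space. But positively spanning an $N$-dimensional space requires at least $N+1$ vectors, and here $k\le \dim F$; hence $C\supsetneq\{0\}$ and a nonzero $u$ exists. One can also argue by cases: if $k<\dim F$ the common kernel of the $l_i$ is already nonzero, while if $k=\dim F$ one takes a preimage of $(1,\dots,1)$, or a kernel vector when the $l_i$ are dependent.

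Given such a $u$, I move from $q$ along the ray $q+tu$, $t\ge 0$. Since $l_i(q+tu)=l_i(q)+t\,l_i(u)\ge l_i(q)\ge l_i(p)$, the whole ray stays in $Q$; and since $q$ lies in the relative interior of the bounded face $F$ and $u\ne 0$, the ray leaves $F$ at a largest parameter $t^\ast>0$, reaching a point $q'$ on the relative boundary of $F$. Replacing $F$ by the minimal face $F'$ of $P$ containing $q'$, I get $q'\in\operatorname{relint}(F')$, $q'\in Q$, and $\dim F'<\dim F$. Iterating, the face dimension strictly decreases as long as it is at least $k$, so the process halts at a point $q\in Q$ lying in a face of dimension at most $k-1$, which is the claim.

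I expect the only delicate point to be the dimension bookkeeping that yields $k-1$ rather than $k$: the descent must be allowed to run one more step precisely when $\dim F=k$, and this is exactly what the ``$N+1$ vectors are needed to positively span $\mathbb{R}^N$'' count licenses. A naive argument that only moves along directions in the common kernel of the $l_i$ would stall at dimension $k$; allowing directions with $l_i(u)\ge0$ (not necessarily $=0$) is what buys the extra step. The remaining ingredients—that the ray meets the relative boundary in a proper subface, and that passing to minimal faces keeps the current point in their relative interior—are standard facts about polytopes.
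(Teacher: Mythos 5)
Your proof is correct, but it takes a genuinely different route from the paper's. The paper argues by induction on~$k$: it sets $P' := P\cap L_{k,+}$, invokes the standard fact that each face of~$P'$ is contained in a face of~$P$ of at most one more dimension, and applies the inductive hypothesis for the remaining $k-1$ forms to the sliced polytope, the base case $k=1$ being handled by maximizing $l_{1}$ at a vertex. You never slice~$P$: you descend through its own face lattice, and your engine is the counting fact that $k$ linear forms cannot positively span an $N$-dimensional space when $k\le N$, so that on any face of dimension at least~$k$ there is a nonzero direction $u$ with $l_{i}(u)\ge 0$ for all~$i$; ray-shooting from the relative interior of the current bounded face along~$u$ keeps every $l_{i}$-value at least $l_{i}(p)$ and lands on the relative boundary, i.e.\ in a face of strictly smaller dimension. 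Your dimension bookkeeping is sound: the dual-cone step (if $C=\{u: l_{i}(u)\ge 0\ \forall i\}$ were $\{0\}$, biduality for the finitely generated, hence closed, cone $\operatorname{cone}(l_{1},\dots,l_{k})$ would force the $k$ forms to positively span the $N$-dimensional dual, impossible for $k\le N$) is exactly what licenses the final descent step from dimension~$k$ to~$k-1$, and plays the role of the paper's base case, where maximizing $l_{1}$ reaches a vertex rather than merely a level set of~$l_{1}$. As for what each approach buys: the paper's induction is shorter, delegating all geometry to one standard fact about halfspace slices of polytopes; yours is more constructive --- a simplex-like pivoting procedure that could actually be executed --- and it stays entirely within the face lattice of the original polytope, which fits the application in Theorem~\ref{thm:bounded-stochasticity}, where faces of $\Delta_{A}$ correspond to supports of the policy. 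The only cosmetic point is terminological: with the convention $l_{i}(u)\ge 0$, your cone $C$ is the dual cone rather than the polar of $\operatorname{cone}(l_{1},\dots,l_{k})$, but the duality argument is unaffected.
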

\begin{proof}
  The argument is by induction. 
  For~$k=1$, the maximum of $l_{1}$ on~$P$ is attained at a vertex~$q$
  of~$P$. Clearly, $l_{1}(q)\ge l_{1}(p)$, and so~$q\in L_{1,+}$. 

  Now suppose that~$k>1$. 
  Let $P' := P\cap L_{k,+}$. 
  Each face of~$P'$ is a subset of a face of~$P$ of at most one more dimension. 
  By induction, $\bigcap_{i=1}^{k-1}L_{i,+}\cap P'$ contains an element $q$ that belongs to a face of~$P'$ of dimension at most~$k-2$. 
\end{proof}

\begin{proof}[of Theorem~\ref{thm:bounded-stochasticity} for discounted rewards] 
By Lemma~\ref{lem:dimension-lemma}, each policy improvement cone $L^{\pi,s}$ contains an element $q$ that belongs to a face of $\Delta_{A}$ of dimension at most~$(k-1)$ (that is, the support of $q$ has cardinality at most~$k$), where $k = |\supp(\beta(s|\cdot))|$. 
Putting these together, we find a policy $\pi'$ in the total policy improvement cone that satisfies
$|\supp(\pi(\cdot|s))|\le|\supp(\beta(s|\cdot))|$ for all $s$.  By Lemma~\ref{lem:policy-improvement-cone}, $V^{\pi'}(w)\ge V^{\pi}(w)$ for all~$w$, and so $\Rcal_{\mu}^{\gamma}(\pi')\ge\Rcal_{\mu}^{\gamma}(\pi)$.
\qed
\end{proof}	

 \begin{figure}
 	\centering
 	\includegraphics[clip=true,trim=0cm .3cm .25cm 0cm, scale=.8]{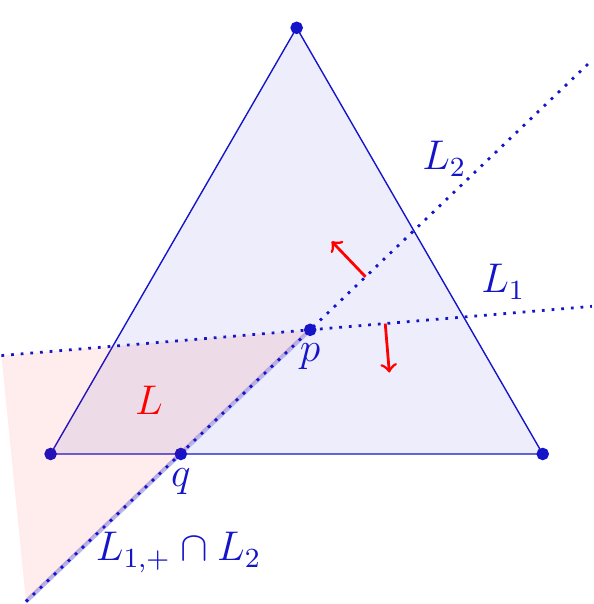}\qquad\qquad
 	\includegraphics[clip=true,trim=0cm 0cm 0cm 0cm, width=4.6cm]{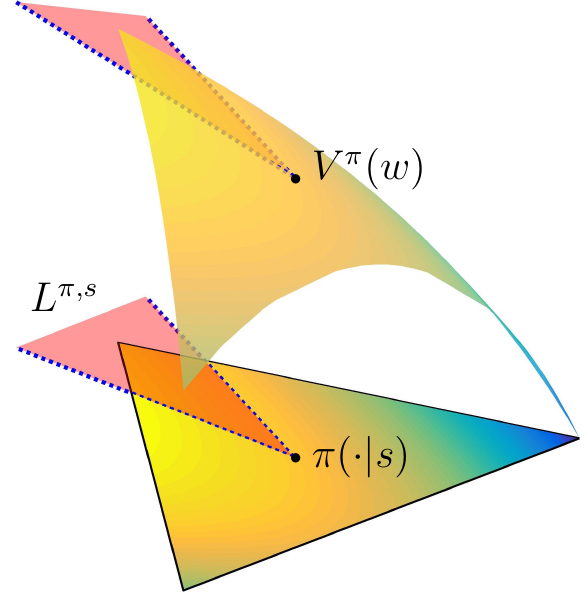}
 	\caption{
Left: Illustration of the policy improvement cone. 
Right: Illustration of the state value function $V^\pi(w)$ for some fixed $w$, 
showing the linear lower bound over the policy improvement cone $L^{\pi,s}$. 
This numerical example is discussed further in Section~\ref{sec:example}. }
\label{figure:1}
 \end{figure}

\begin{remark}
The $|\supp \beta(s|\cdot)|$ positive probability actions at sensation $s$ do not necessarily correspond to the actions that the agent would choose if she knew the identity of the world
state, as shown in our example from Section~\ref{sec:example}. 
\end{remark}

\section{Average Rewards from Discounted Rewards}
\label{sec:averagefromdiscounted}

The average reward per time step can be written in terms of the discounted reward as $\Rcal(\pi)=\Rcal_{p^{\pi}_{\mu}}^{\gamma}$. 
However, the hypothesis $V^{\pi'}(w)\geq V^\pi(w)$ for all $w$, does not directly imply any relation between $\Rcal(\pi')$ and $\Rcal(\pi)$, since they compare the value function against different stationary distributions. 
We show that results for discounted rewards translate nonetheless to results for average rewards.

\begin{lemma}
	\label{lem:mu-converges-uniformly}
	Let $\mu$ be fixed, and assume~$(*)$.  For any $\epsilon>0$ there exists $l>0$ such that for all $\pi$ and all $t\ge l$, 
$|\mu_{\pi}^{t}(w) - p^{\pi}_{\mu}(w)| \le \epsilon$ for all $w$.
\end{lemma}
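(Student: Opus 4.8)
The plan is to pass to the one-step transition matrix of the world-state Markov chain induced by $\pi$, namely $P_\pi(w'|w) = \sum_a p^\pi(a|w)\alpha(w'|w,a)$, so that $\mu_\pi^t = \mu P_\pi^t$ and $p_\mu^\pi$ is the unique stationary distribution with $p_\mu^\pi P_\pi = p_\mu^\pi$ (uniqueness by~$(*)$). For each fixed $\pi$, aperiodicity and irreducibility make $P_\pi$ primitive, and Perron--Frobenius already yields geometric convergence $\mu_\pi^t \to p_\mu^\pi$. The entire content of the lemma is therefore the \emph{uniformity} of this convergence over the compact set $\Delta_{S,A}$; the subtlety is that the sparsity pattern of $P_\pi$ changes with $\pi$ (some transition probabilities vanish at deterministic policies), so the per-$\pi$ rates cannot be naively patched together.

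First I would establish a uniform contraction. Because $P_\pi$ is primitive for every $\pi$, some power $P_\pi^{m_\pi}$ is entrywise positive; positivity of a matrix power is an open condition in $\pi$, and once an irreducible stochastic matrix has a positive power all higher powers stay positive (each column of $P_\pi$ carries a positive entry by irreducibility). Covering the compact set $\Delta_{S,A}$ by finitely many such neighborhoods and taking $m$ to be the largest of the corresponding exponents produces a single $m$ with $P_\pi^m > 0$ for all $\pi$. (Alternatively, Wielandt's bound supplies the explicit universal value $m = (|W|-1)^2 + 1$.) Since $\pi \mapsto P_\pi^m$ is continuous with positive entries on the compact set $\Delta_{S,A}$, the minimal entry $\eta := \min_\pi \min_{w,w'} P_\pi^m(w'|w)$ is attained and strictly positive.

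I would then conclude through the Dobrushin ergodicity coefficient $\delta(P_\pi^m) = 1 - \min_{w_1,w_2}\sum_{w'} \min\{P_\pi^m(w'|w_1),\, P_\pi^m(w'|w_2)\}$. The uniform bound $\eta>0$ gives $\delta(P_\pi^m) \le 1 - |W|\eta =: \rho < 1$ for all $\pi$. Using that $\delta$ contracts the total-variation distance and that $p_\mu^\pi$ is fixed by $P_\pi^m$, one obtains $\|\mu_\pi^{(k+1)m} - p_\mu^\pi\| \le \rho\,\|\mu_\pi^{km} - p_\mu^\pi\|$, hence $\|\mu_\pi^{km} - p_\mu^\pi\| \le \rho^{k}$; since applying $P_\pi^r$ for $0\le r<m$ does not increase the distance to stationarity, $\|\mu_\pi^t - p_\mu^\pi\| \le \rho^{\lfloor t/m\rfloor}$ for every $t$, uniformly in $\pi$. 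Choosing $l$ so that $\rho^{\lfloor l/m\rfloor} \le \epsilon$ finishes the argument, because the pointwise differences $|\mu_\pi^t(w) - p_\mu^\pi(w)|$ are dominated by the total-variation distance.

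The main obstacle is exactly the uniformity in the second paragraph: producing a \emph{single} power $m$ and a \emph{single} positive lower bound $\eta$ valid for all $\pi$ at once. This is where compactness of $\Delta_{S,A}$, combined with the fact that $(*)$ forces primitivity at \emph{every} policy (including the boundary deterministic ones), is indispensable; without $(*)$ the chain could be reducible or periodic for some $\pi$, the bound $\eta$ would collapse to $0$, and no uniform rate could exist.
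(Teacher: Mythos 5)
Your proof is correct, but it takes a genuinely different route from the paper. The paper argues spectrally: it expands $\mu$ in left eigenvectors of the transition matrix, bounds $|\mu_{\pi}^{t}(w) - p^{\pi}_{\mu}(w)| \le |W|\,|\lambda_{2}|^{t}$ in terms of the second-largest eigenvalue modulus, and then uses continuity of $|\lambda_{2}|$ in $\pi$ together with compactness of $\Delta_{S,A}$ to extract a uniform rate $d = \max_{\pi}|\lambda_{2}| < 1$. You instead prove a uniform minorization: a single primitivity exponent $m$ valid for all $\pi$ (either by covering $\Delta_{S,A}$ with neighborhoods where a fixed power stays positive, or directly via Wielandt's bound $m = (|W|-1)^{2}+1$, which makes the covering step unnecessary), then compactness to get a uniform entrywise lower bound $\eta > 0$ on $P_{\pi}^{m}$, and finally the Dobrushin coefficient to contract total variation at the uniform rate $\rho = 1 - |W|\eta$. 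Both proofs use compactness of the policy simplex as the engine of uniformity, but at different places: the paper applies it to the spectral gap, you apply it to the minorization constant. Your route is arguably more robust: the paper's expansion of $\mu$ into \emph{orthonormal} left eigenvectors tacitly assumes the transition matrix is diagonalizable with an orthonormal left eigenbasis, which a general (nonsymmetric) stochastic matrix need not admit, whereas the Dobrushin argument makes no structural assumption beyond primitivity and yields the explicit uniform rate $\rho^{\lfloor t/m\rfloor}$. The price is a slightly longer argument and the need for the auxiliary facts you correctly supplied (positivity of powers propagates upward because every column of an irreducible stochastic matrix has a positive entry; one application of a stochastic matrix never increases total-variation distance to the stationary distribution).
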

\begin{proof}
%
  By~$(*)$, the transition matrix of the Markov chain has the eigenvalue one with multiplicity one, with left
  eigenvector is $p^{\pi}_{\mu}$.  Let $p_{2},\dots,p_{|W|}$ be orthonormal left eigenvectors to the other eigenvalues
  $\lambda_{2},\dots,\lambda_{|W|}$, 
  ordered such that $\lambda_2$ has the largest absolute value. 
  There is a unique expansion $\mu = c_{1} p^{\pi}_{\mu} + c_{2}p_{2} + \dots + c_{|W|}p_{|W|}$.  Then $\mu_{\pi}^{t} = c_{1}
  p^{\pi}_{\mu} + \sum_{i=2}^{|W|} c_{i} \lambda_{i}^{t} p_{i}$.  Letting $t\to\infty$, it follows that $c_{1} = 1$.  By
  orthonormality, $|c_{i}|^{2}\le \sum_{i=2}^{|W|}c_{i}^{2} \le \|\mu\|_{2}^{2} \le 1$ and $|p_{i}(w)|\le 1$ for
  $i=2,\dots,|W|$.  Therefore,
    $|\mu_{\pi}^{t}(w) - p^{\pi}_{\mu}(w)| = |\sum_{i=2}^{|W|} c_{i} \lambda_{i}^{t} p_{|W|}(w)|
    \le |W| |\lambda_{2}|^{t}$.
	
	Since $|\lambda_{2}|$ depends continuously on the transition matrix, which depends continuously on $\pi$,
    $|\lambda_{2}|$ depends continuously on~$\pi$.  Since $\Delta_{S,A}$ is compact, $|\lambda_{2}|$ has a maximum~$d$, and $d<1$ due
    to~$(*)$. 
    Therefore, $|\mu_{\pi}^{t}(w) - p^{\pi}_{\mu}(w)|\le |W| d^{t}$ for all~$\pi$.  The statement follows from this.
    \qed\end{proof}

\begin{proposition}
	\label{prop:R-converges-uniformly}
	For fixed~$\mu$, 
	under assumption~$(*)$, $\Rcal_\mu^\gamma(\pi)\to \Rcal_\mu(\pi)$ uniformly in~$\pi$ as $\gamma\to1$. %
	Thus, $\Rcal_\mu^\gamma\to \Rcal_\mu$ uniformly in $\pi$ as $\gamma\to1$.
\end{proposition}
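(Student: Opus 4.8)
The plan is to reduce everything to the expected immediate reward $r^{\pi}(w) := \sum_{a} p^{\pi}(a|w) R(a,w)$, which is uniformly bounded by $R = \max_{a,w}|R(a,w)|$. In this notation $\Rcal_{\mu}^{\gamma}(\pi) = (1-\gamma)\sum_{t=0}^{\infty}\gamma^{t}\sum_{w}\mu_{\pi}^{t}(w)\, r^{\pi}(w)$ and $\Rcal_{\mu}(\pi) = \sum_{w} p^{\pi}_{\mu}(w)\, r^{\pi}(w)$. The key algebraic observation is that $(1-\gamma)\sum_{t=0}^{\infty}\gamma^{t} = 1$, so the average reward can itself be written as a discounted sum,
\[
\Rcal_{\mu}(\pi) = (1-\gamma)\sum_{t=0}^{\infty}\gamma^{t}\sum_{w} p^{\pi}_{\mu}(w)\, r^{\pi}(w),
\]
now carrying the same weights $\gamma^{t}$ as the discounted reward but with the stationary distribution $p^{\pi}_{\mu}$ in place of the time-$t$ distribution $\mu_{\pi}^{t}$.

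Subtracting, I would then obtain
\[
\Rcal_{\mu}^{\gamma}(\pi) - \Rcal_{\mu}(\pi) = (1-\gamma)\sum_{t=0}^{\infty}\gamma^{t}\sum_{w}\bigl(\mu_{\pi}^{t}(w) - p^{\pi}_{\mu}(w)\bigr)\, r^{\pi}(w),
\]
and bound the absolute value by $(1-\gamma)R\sum_{t=0}^{\infty}\gamma^{t}\sum_{w}|\mu_{\pi}^{t}(w) - p^{\pi}_{\mu}(w)|$. Here Lemma~\ref{lem:mu-converges-uniformly} does the real work: its proof supplies the explicit estimate $|\mu_{\pi}^{t}(w) - p^{\pi}_{\mu}(w)|\le |W|\, d^{t}$ with a constant $d<1$ \emph{independent of}~$\pi$. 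Inserting this yields the $\pi$-independent bound $(1-\gamma)|W|^{2}R\sum_{t=0}^{\infty}(\gamma d)^{t} = |W|^{2}R\,\frac{1-\gamma}{1-\gamma d}$, which tends to $0$ as $\gamma\to 1$ because $d<1$; since it does not depend on~$\pi$, this is precisely uniform convergence. If one prefers to invoke only the statement of Lemma~\ref{lem:mu-converges-uniformly} rather than its proof, the same conclusion follows by splitting the time sum at the index $l$ furnished by the lemma: the tail $t\ge l$ contributes at most $\epsilon R$, while the head $t<l$ carries total weight $1-\gamma^{l}\to 0$ and is controlled crudely via $\sum_{w}|\mu_{\pi}^{t}(w)-p^{\pi}_{\mu}(w)|\le 2$, so that choosing $\gamma$ close enough to~$1$ makes the whole difference small for all $\pi$ at once.

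The only genuine subtlety, and the point at which a naive argument would fail, is the uniformity in~$\pi$. Pointwise convergence $\Rcal_{\mu}^{\gamma}(\pi)\to\Rcal_{\mu}(\pi)$ is elementary, but obtaining it uniformly requires the mixing rate of the chain to be bounded away from the non-mixing regime uniformly over the compact policy simplex $\Delta_{S,A}$. This is exactly what assumption~$(*)$ secures through the uniform spectral gap $1-d>0$ established in Lemma~\ref{lem:mu-converges-uniformly}; without it the modulus $|\lambda_{2}(\pi)|$ of the subdominant eigenvalue could approach~$1$ along a sequence of policies and the convergence would degrade to merely pointwise. The final sentence of the proposition, uniform convergence of $\Rcal_{\mu}^{\gamma}$ to $\Rcal_{\mu}$ as functions on $\Delta_{S,A}$, is then just a restatement of the displayed estimate.
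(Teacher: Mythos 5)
Your proposal is correct, and your fallback variant (splitting the time sum at the index $l$ furnished by Lemma~\ref{lem:mu-converges-uniformly}) is essentially the paper's proof: the paper likewise keeps the head $t<l$ with the true distributions $\mu_\pi^t$, replaces the tail by the stationary distribution at cost $O(\epsilon R)$, and absorbs the head into an $O((1-\gamma)lR)$ term before choosing $\epsilon$ and then $\gamma$. Your primary argument is organized differently and is a genuine refinement: instead of approximating $\Rcal_\mu^\gamma$ piecewise, you first write $\Rcal_\mu(\pi) = (1-\gamma)\sum_{t}\gamma^t\sum_w p_\mu^\pi(w)r^\pi(w)$ via $(1-\gamma)\sum_t\gamma^t=1$, subtract exactly, and then insert the geometric estimate $|\mu_\pi^t(w)-p_\mu^\pi(w)|\le |W|d^t$ with $d<1$ uniform over $\Delta_{S,A}$. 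This yields the closed-form, $\pi$-independent rate $|W|^2 R\,\frac{1-\gamma}{1-\gamma d}$, which is strictly more information than the paper's qualitative statement (an explicit modulus of convergence in $\gamma$), and it dispenses with the $\epsilon$--$l$ bookkeeping entirely. The price is that this variant invokes the estimate $|W|d^t$ from \emph{inside} the proof of Lemma~\ref{lem:mu-converges-uniformly} rather than its statement, so it inherits any fragility of that spectral argument (the lemma's proof tacitly assumes a well-behaved eigenvector expansion of the transition matrix), whereas the paper's route, and your fallback, need only the lemma as stated. Two harmless slips: in the fallback, the tail contribution should be $|W|\epsilon R$ rather than $\epsilon R$, since the lemma's bound is per coordinate and you sum over $w\in W$; and your correct identification of where uniformity enters (the uniform spectral gap $1-d>0$ from compactness of $\Delta_{S,A}$ under $(*)$) matches the paper's reasoning exactly.
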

\begin{proof}
  For fixed $\mu$ and $\epsilon$, let $l$ be as in Lemma~\ref{lem:mu-converges-uniformly}.  Let
  $R=\max_{a,w}|R(a,w)|$.~Then
	\begin{align*}
	\Rcal_\mu^\gamma(\pi)
	& = (1-\gamma)\sum_{k=0}^{l-1} \gamma^{k} \sum_{w}\mu^{k}_{\pi}(w) \sum_{a}\pi(a|w) R(a,w) \\
	& \quad
	+ (1-\gamma) \gamma^{l} \sum_{k=0}^{\infty}\gamma^{k} \sum_{w}p_{\mu}^{\pi}(w) \sum_{a}\pi(a|w) R(a,w)
	+ O(\epsilon R) (1-\gamma)\sum_{k=0}^{\infty}\gamma^{k} \\
	& = O((1-\gamma) l R) + O(\epsilon R) + \gamma^{l} \Rcal_\mu(\pi)
	\end{align*}
	for all~$\pi$.
	For given $\delta>0$, we can choose $\epsilon>0$ such that the term $O(\epsilon R)$ is smaller in absolute value
	than~$\delta/3$.  This also fixes $l=l(\epsilon)$.  Then, for any $\gamma<1$ large enough, the term $O((1-\gamma)l R)$
	is smaller than $\delta/3$, and also $|(\gamma^{l} - 1) \Rcal_\mu(\pi)|\le \delta/3$.  This shows that for $\gamma<1$
	large enough, $|\Rcal_\mu^\gamma(\pi) - \Rcal_\mu(\pi)| \le \delta$, independent of~$\pi$.  The statement follows since
	$\delta>0$ was arbitrary.
\qed\end{proof}

\begin{theorem}
	\label{thm:maximum-convergence}
	For any~$\gamma\in[0,1)$, let $\hat\pi_{\gamma}$ be a policy that maximizes~$\Rcal_{\gamma}^{\mu}$.  Let $\hat\pi$ be a
	limit point of a convergent subsequence as $\gamma\to 1$.  Then $\hat\pi$ maximizes~$\Rcal_\mu$, and
	$\lim_{\gamma\to1}\Rcal^{\gamma}_{\mu}(\hat\pi_{\gamma}) = \Rcal_\mu(\hat\pi)$.
\end{theorem}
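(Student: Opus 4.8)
The plan is to exploit the uniform convergence $\Rcal_\mu^\gamma\to\Rcal_\mu$ from Proposition~\ref{prop:R-converges-uniformly} together with the continuity of $\Rcal_\mu$ from Lemma~\ref{lem:R-continuous} to carry the optimality property through the limit $\gamma\to1$. I would write $M_\gamma=\max_{\pi}\Rcal_\mu^\gamma(\pi)=\Rcal_\mu^\gamma(\hat\pi_\gamma)$ and $M=\max_{\pi}\Rcal_\mu(\pi)$, both maxima existing by compactness of $\Delta_{S,A}$ and continuity of the respective objectives. Since uniform convergence of functions forces convergence of their suprema, the first observation is
\[
  |M_\gamma-M|\le\sup_{\pi}|\Rcal_\mu^\gamma(\pi)-\Rcal_\mu(\pi)|\xrightarrow[\gamma\to1]{}0,
\]
so that $\Rcal_\mu^\gamma(\hat\pi_\gamma)=M_\gamma\to M$ as a full limit in $\gamma$. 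This already delivers the numerical convergence claim once $M$ is identified with $\Rcal_\mu(\hat\pi)$.

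Next I would work along the subsequence $\gamma_n\to1$ with $\hat\pi_{\gamma_n}\to\hat\pi$ and split the error as
\[
  |\Rcal_\mu(\hat\pi)-M_{\gamma_n}|\le|\Rcal_\mu(\hat\pi)-\Rcal_\mu(\hat\pi_{\gamma_n})|+|\Rcal_\mu(\hat\pi_{\gamma_n})-\Rcal_\mu^{\gamma_n}(\hat\pi_{\gamma_n})|.
\]
The first term tends to $0$ by continuity of $\Rcal_\mu$ at $\hat\pi$ (Lemma~\ref{lem:R-continuous}), since $\hat\pi_{\gamma_n}\to\hat\pi$; the second tends to $0$ by the uniform convergence of Proposition~\ref{prop:R-converges-uniformly}. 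Letting $n\to\infty$ and combining with $M_{\gamma_n}\to M$ yields $\Rcal_\mu(\hat\pi)=M$, i.e.\ $\hat\pi$ maximizes $\Rcal_\mu$. Together with the first step this pins down $\lim_{\gamma\to1}\Rcal_\mu^\gamma(\hat\pi_\gamma)=M=\Rcal_\mu(\hat\pi)$, completing both assertions.

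The conceptual obstacle, already resolved upstream, is that the discounted optimizers $\hat\pi_\gamma$ are selected by a $\gamma$-dependent family of objectives evaluated against different stationary behaviour, so optimality at one $\gamma$ conveys no a priori information about $\Rcal_\mu$. What legitimizes interchanging the maximization with the limit is precisely the \emph{uniformity} in $\pi$ of $\Rcal_\mu^\gamma\to\Rcal_\mu$; without it a sequence of discounted maximizers could converge to a non-maximizer of $\Rcal_\mu$, the classical failure of $\arg\max$ stability under merely pointwise convergence. Within this proof the only delicate point is to keep the two error terms separated as above, applying continuity to the fixed limit objective $\Rcal_\mu$ and uniformity to the moving objective $\Rcal_\mu^{\gamma_n}$, rather than attempting to bound $\Rcal_\mu^{\gamma_n}(\hat\pi_{\gamma_n})$ against $\Rcal_\mu(\hat\pi)$ in a single step.
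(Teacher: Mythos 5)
Your proof is correct and follows essentially the same route as the paper: both use the uniform convergence of Proposition~\ref{prop:R-converges-uniformly} to get convergence of the maxima $M_\gamma\to M$, and then combine it with the continuity of $\Rcal_\mu$ (Lemma~\ref{lem:R-continuous}) via a triangle-inequality split to identify $\Rcal_\mu(\hat\pi)$ with $M$. The only cosmetic difference is that the paper anchors its $2\epsilon$-estimate at $\hat\pi_\gamma$ (bounding $|\max_\pi\Rcal_\mu(\pi)-\Rcal_\mu(\hat\pi_\gamma)|$) while you anchor yours at the limit point $\hat\pi$, which is an equivalent rearrangement of the same argument.
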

\begin{proof}
	For any $\epsilon>0$, there is $\delta>0$ such that $\gamma\ge 1-\delta$ implies $|\Rcal_\mu(\pi) -
	\Rcal_\mu^\gamma(\pi)|\le\epsilon$ for all~$\pi$. 
	Thus $|\max_{\pi} \Rcal_\mu(\pi) - \max_{\pi}\Rcal_\mu^\gamma|\le\epsilon$, whence
	$\lim_{\gamma\to 1}\max_{\pi}\Rcal_\mu^\gamma(\pi) = \max_{\pi}\Rcal_\mu(\pi)$.
	Moreover, 
	$|\max_{\pi}\Rcal_\mu(\pi) - \Rcal_\mu(\hat\pi_{\gamma})|
	\le 2\epsilon + |\max_{\pi}\Rcal_\mu^\gamma(\pi) - \Rcal_\mu^\gamma(\hat\pi_{\gamma})| = 2\epsilon$. 
	By continuity, the limit value of $\Rcal_\mu$ applied to a convergent subsequence of the $\hat\pi_{\gamma}$ is the
	maximum of~$\Rcal_\mu$.
\qed\end{proof}

\begin{corollary}
	\label{cor:support-restrictions-survive}
	Fix a world state~$w$, and let~$r\ge 0$. 
	If there exists for each~$\gamma\in[0,1)$ a policy $\hat\pi_{\gamma}$ that
	is optimal for $\Rcal_\mu^\gamma$ with $|\supp(\pi(\cdot|s))| \le r$, then there exists a policy $\hat\pi$ with
	$|\supp(\pi(\cdot|s))| \le r$ that is optimal for~$\Rcal_\mu$. 
\end{corollary}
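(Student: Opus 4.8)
The plan is to package Theorem~\ref{thm:maximum-convergence} with a compactness argument, using the fact that the support constraint cuts out a closed subset of the policy simplex.

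First I would fix any sequence $\gamma_n \uparrow 1$ and pick, for each $n$, an optimizer $\hat\pi_{\gamma_n}$ of $\Rcal_\mu^{\gamma_n}$ satisfying $|\supp(\hat\pi_{\gamma_n}(\cdot|s))| \le r$; such policies exist by hypothesis. Since $\Delta_{S,A}$ is compact, the sequence $(\hat\pi_{\gamma_n})_n$ has a convergent subsequence, say with limit $\hat\pi$. Applying Theorem~\ref{thm:maximum-convergence} to this subsequence shows that $\hat\pi$ maximizes $\Rcal_\mu$.

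It then remains to verify that the support bound survives in the limit. Here the point is that $\{q \in \Delta_A : |\supp(q)| \le r\}$ is closed, being the union of the (finitely many) faces of $\Delta_A$ of dimension at most $r-1$. Equivalently, support cardinality is lower semicontinuous: if $q_m \to q$ and $a \in \supp(q)$, then $q(a)>0$, so $q_m(a)>0$ for all large $m$ and hence $a \in \supp(q_m)$; choosing $m$ large enough to capture all finitely many $a \in \supp(q)$ at once gives $\supp(q) \subseteq \supp(q_m)$ and thus $|\supp(q)| \le |\supp(q_m)| \le r$. Applying this to the conditional distribution at $s$ along the convergent subsequence yields $|\supp(\hat\pi(\cdot|s))| \le r$, which completes the argument.

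I do not anticipate a genuine obstacle: the statement is essentially a closedness/compactness wrapper around Theorem~\ref{thm:maximum-convergence}, and the only step requiring a line of care---and the natural place for a slip to hide---is the lower semicontinuity of the support, i.e.\ that coordinates can only vanish, never appear, under a limit. I also note that the world state $w$ named in the hypothesis plays no role in the conclusion; the restriction constrains the action distribution $\pi(\cdot|s)$ at the sensor state $s$, and the argument above is carried out entirely at that coordinate (and would apply verbatim to all $s\in S$ simultaneously).
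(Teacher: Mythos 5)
Your proposal is correct and follows essentially the same route as the paper, whose entire proof is to take a limit point of the family $\hat\pi_{\gamma}$ and apply Theorem~\ref{thm:maximum-convergence}; you merely make explicit the compactness extraction and the closedness (lower semicontinuity of support) step that the paper leaves implicit. Your side observation is also accurate: the world state~$w$ in the hypothesis plays no role, and the argument applies simultaneously to all $s\in S$.
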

\begin{proof}
	Take a limit point of the family $\hat\pi_{\gamma}$ as $\gamma\to1$ and apply Theorem~\ref{thm:maximum-convergence}.
\qed\end{proof}

\begin{remark}
  Without~$(*)$, one can show that $\Rcal^{\gamma}_{\mu}(\pi)$ still converges to $\Rcal_{\mu}(\pi)$ for each
  fixed~$\pi$, but convergence is no longer uniform.
  Also, $\Rcal_{\mu}$ need not be continuous in~$\pi$, and so an optimal policy need not exist. 
\end{remark}

\section{Example}
\label{sec:example}

We illustrate our results on an example from~\cite{montufar2015geometry}.
Consider an agent with sensor states $S=\{1,2,3\}$ and actions $A=\{1,2,3\}$. 
The system has world states $W=\{1,2,3,4\}$ with the transitions and rewards illustrated in Fig.~\ref{fig:example}. 
At $w=1,4$ all actions produce the same outcomes. States $w=2,3$ are observed as $s=2$. 
Hence we can focus on $\pi(\cdot|s=2)\in\Delta_A$. 
We evaluate $861$ evenly spaced policies in this $2$-simplex. 
Fig.~\ref{fig:example} shows color maps of the expected reward (interpolated between evaluations), with lighter colors corresponding to higher values. 
As in Fig.~\ref{figure:1}, red vectors are the gradients of the linear forms (corresponding to $Q^\pi(w,\cdot)$, $w=2,3$), and dashed blue lines limit the policy improvement cones $L^{\pi,s=2}$.
Stepping into the improvement cone always increases $V^\pi(w) =\Rcal^\gamma_{\mu=\delta_w}(\pi)$ for all $w\in W$. 
Note that each cone contains a policy at an edge of the simplex, i.e., assigning positive probability to at most two actions. 
The convergence of $\Rcal^\gamma_\mu$ to $\Rcal_\mu$ as $\gamma\to1$ is visible. 
Note also that for $\gamma=0.6$ the optimal policy requires two positive probability actions, so that our upper bound $|\supp(\pi(\cdot|s))|\leq |\supp(\beta(s|\cdot))|$ is attained. 

\begin{figure}[t]
\centering	
\includegraphics[scale=.8]{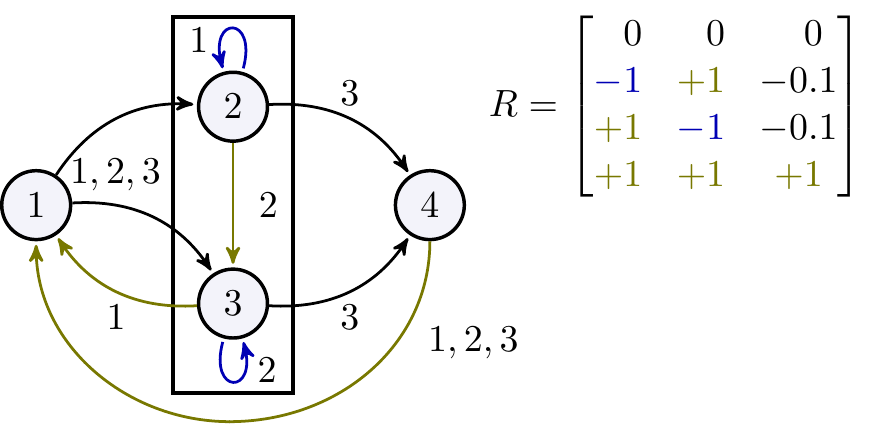}\\
\includegraphics[clip=true,trim=0cm 80cm 0cm 0cm, width=11cm]{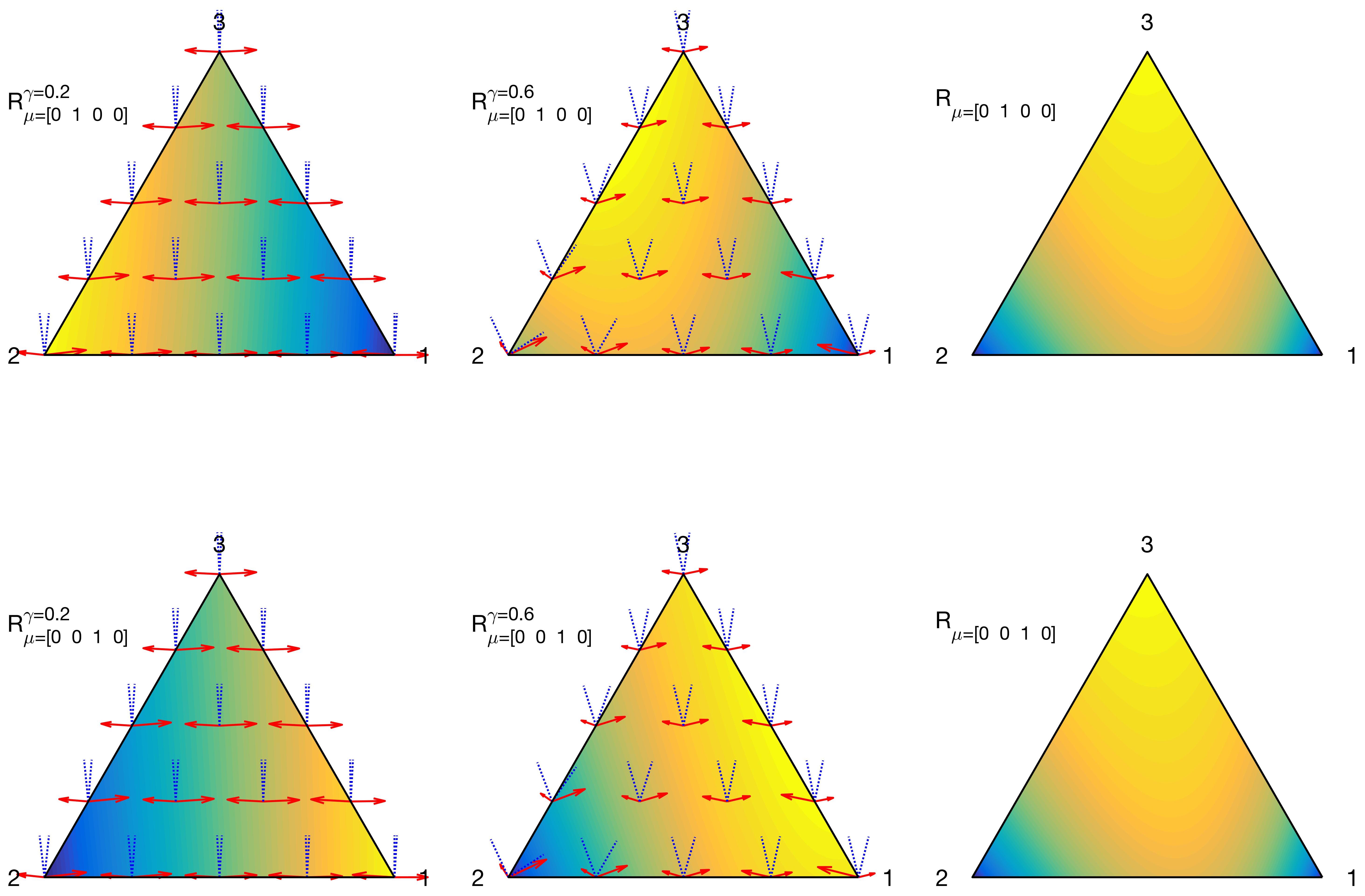}\\
\includegraphics[clip=true,trim=0cm 0cm 0cm 80cm, width=11cm]{example1dirs}
\caption{Illustration of the example form Section~\ref{sec:example}. Top: State transitions and reward signal. 
		Bottom: Numerical evaluation of the expected long term reward. }
	\label{fig:example}
\end{figure}

\medskip
\noindent
{\bf Acknowledgment:} We thank Nihat Ay for support and insightful comments. 

\FloatBarrier


\bibliography{referenzen}
\bibliographystyle{abbrv}

\end{document}